\documentclass[letterpaper, 10 pt, conference]{ieeeconf}  

\IEEEoverridecommandlockouts                              
\usepackage{bmathshortcuts}

\usepackage{amsthm}

\newtheorem{theorem}{Theorem}

\newtheorem{corollary}[theorem]{Corollary}

\usepackage{algorithmic}
\usepackage[ruled,vlined]{algorithm2e}
\usepackage{amsmath,amssymb,amsfonts}
\usepackage{bm}
\usepackage{booktabs}
\usepackage{censor}
\usepackage{cite} 
\usepackage{color}
\usepackage[table]{xcolor}
\usepackage{comment}
\usepackage{epsfig}
\usepackage{fancyhdr}
\usepackage{float}
\usepackage{gensymb}
\usepackage{graphics}
\usepackage{graphicx}
\usepackage[hidelinks]{hyperref} 
\usepackage{mathptmx}
\usepackage{mathrsfs}
\usepackage{makecell}
\usepackage{soul}
\usepackage{textcomp}
\usepackage{tikz}
\usepackage{times}
\usepackage[dvipsnames]{xcolor} 
\usepackage{url}

\definecolor{lightblue}{RGB}{173,216,230}
\definecolor{lightpink}{RGB}{255, 204, 204}

{\vspace{-\topsep}\begin{itemize}\itemsep1pt \parskip0pt \parsep1pt}
{\end{itemize}\vspace{-\topsep}}

{\vspace{-\topsep}\begin{enumerate}\itemsep1pt \parskip0pt \parsep1pt}
{\end{enumerate}\vspace{-\topsep}}

\title{\LARGE \bf Trajectory Generation for Underactuated Soft Robot Manipulators using Discrete Elastic Rod Dynamics}

\newif\ifreview
\reviewfalse

\ifreview
\author{
    \censor{Beibei Liu}$^{1}$, \censor{Akua K. Dickson}$^{1}$, \censor{Ran Jing}$^{2}$, \censor{Andrew P. Sabelhaus}$^{1,2}$%
    \thanks{$^{1}$\censor{Beibei Liu, Akua K. Dickson and Andrew P. Sabelhaus} are with the \xblackout{Division of Systems Engineering, Boston University, Boston MA, USA}.
            \censor{{\tt\small \{liubb, akuad\}@bu.edu}}}%
    \thanks{$^{2}$\censor{Ran Jing and Andrew P. Sabelhaus} are with the \xblackout{Department of Mechanical Engineering, Boston University, Boston MA, USA}.
            \censor{{\tt\small \{rjing, asabelha\}@bu.edu}}}%
}
\else
\author{Beibei Liu$^{1}$, Akua K. Dickson$^{1}$, Ran Jing$^{2}$, Andrew P. Sabelhaus$^{1,2}$
\thanks{$^{1}$Beibei Liu, Akua K. Dickson and Andrew P. Sabelhaus are with the Division of Systems Engineering, Boston University, Boston MA, USA.
        {\tt\small \{liubb, akuad\}@bu.edu}}%
\thanks{$^{2}$ Ran Jing and Andrew P. Sabelhaus are with the Department of Mechanical Engineering, Boston University, Boston MA, USA.
        {\tt\small \{rjing, asabelha\}@bu.edu}}%
}
\fi

\begin{document}

\maketitle
\pagestyle{empty}  
\thispagestyle{empty} 

\begin{abstract}
Soft robots are well suited for contact-rich tasks due to their compliance, yet this property makes accurate and tractable modeling challenging. Planning motions with dynamically-feasible trajectories requires models that capture arbitrary deformations, remain computationally efficient, and are compatible with underactuation. However, existing approaches balance these properties unevenly: continuum rod models provide physical accuracy but are computationally demanding, while reduced-order approximations improve efficiency at the cost of modeling fidelity. To address this, our work introduces a control-oriented reformulation of Discrete Elastic Rod (DER) dynamics for soft robots, and a method to generate trajectories with these dynamics. The proposed formulation yields a control-affine representation while preserving certain first-principles force–deformation relationships. As a result, the generated trajectories are both dynamically feasible and consistent with the underlying actuation assumptions. We present our trajectory generation framework and validate it experimentally on a pneumatic soft robotic limb. Hardware results demonstrate consistently improved trajectory tracking performance over a constant-curvature-based baseline, particularly under complex actuation conditions.

\end{abstract}


\section{INTRODUCTION}
\label{Sect: Intro}
Soft robots have shown great potential for contact-rich and deformation-driven tasks, such as human-robot interaction and operation in delicate, unstructured environments \cite{vasicSafetyIssuesHumanrobot2013,Laschi2016Soft,
Sanan2011Physical, Dickson2025Safe}. 
Even though their inherent compliance improves safety \cite{majidi_2014_soft_robots} and flexibility \cite{Hsuan2019SoftMedical} within the environment, it also introduces challenges for modeling \cite{Das2019Review}. 
Task performance is enhanced when the model is computationally efficient, physically interpretable, and compatible with hardware implementation while deforming per arbitrary loading conditions.


\begin{figure}[t]
\centerline{\includegraphics[width=1\linewidth]{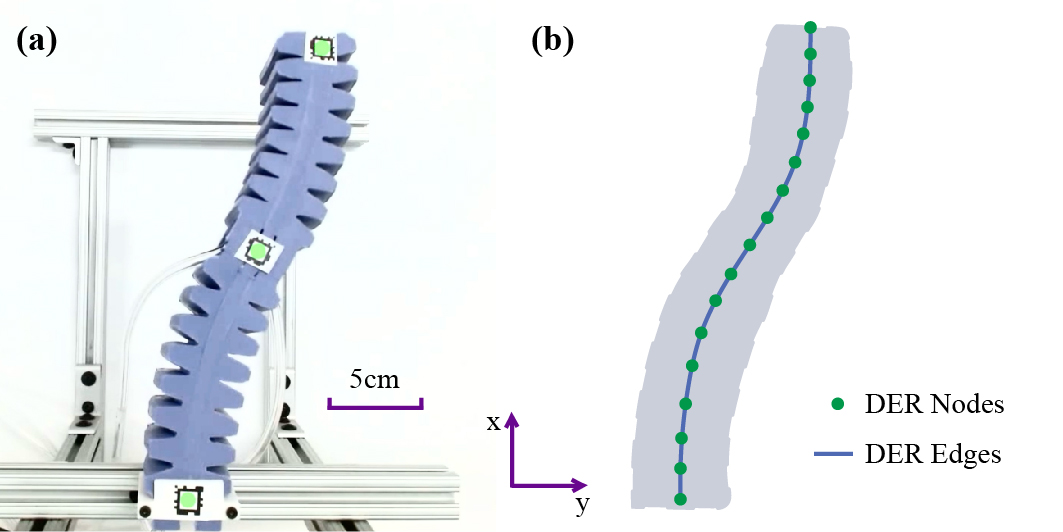}}
\caption{
This work transforms a physics-based model of interconnected elastic rods into a control-affine representation for underactuated soft robots by introducing an explicit actuator-to-force mapping. The soft limb (a) executes a motion profile generated by our resulting model to track a desired trajectory. The Discrete Elastic Rod (DER) model uses a set of point masses to approximate the rod's backbone, as illustrated in (b).
}
\label{Fig:Overview}
\vspace{-0.5cm}
\end{figure}

To date, no one dynamics model of soft robot manipulators has fully met each of these criteria.
High-fidelity models, such as Cosserat rod and strain-based formulations, accurately capture deformation \cite{doroudchi_configuration_2021,renda_dynamics_2024}, but require solving complex PDEs which limit real-time performance.
Reduced-order models with strong kinematic assumptions such as piecewise constant curvature (PCC) provide computational efficiency, yet may sacrifice direct correspondence to underlying continuum mechanics and may fail to capture important deformation modes \cite{webster2010design,katzschmann2019dynamic, della_santina_model-based_2020, della_santina_improved_2020}. 
Furthermore, soft robotic systems are typically underactuated \cite{Manti2015Underactuated}. 
In this setting, simplified dynamic models can fail to accurately predict deformations under both actuation and complex external loads, whereas highly-detailed models can lack a clear mapping from actuator states to the model's generalized forces \cite{Thuruthel2020First-Order, Naughton2021Elastica}. Consequently, developing a model that balances simplicity, physical interpretability, and real-time performance remains an open challenge.

An intermediate modeling representation could bridge this gap. Discrete Elastic Rod (DER) models naturally occupy this middle ground, as they discretize high-fidelity continuum models while preserving the underlying mechanics assumptions, therefore remaining computationally tractable \cite{Choi2024DisMech, Radha2025Py-DiSMech, Choi2021Implicit}. 
However, existing DER formulations are not fully compatible with underactuated soft robots \cite{Choi2021Implicit}. These methods either lack actuator models or rely on implicit mappings from actuation to deformation variables such as curvature.
And to the best of our knowledge, no prior work has generated dynamically-feasible state-input trajectories for task-space objectives with a discrete elastic rod.

To address this limitation, we propose a control-oriented reformulation of DER dynamics for soft robotic systems, which preserves its mechanics assumptions while enabling compatibility with underactuated control. 
Our approach explicitly incorporates actuation in a control-affine structure that enables direct computation of control inputs, avoiding the need for implicit or post-hoc actuation mappings. 
We show how to generate dynamically feasible trajectories using this property. 
Then, we execute the generated trajectories using the proposed model on a pneumatic soft robot in open-loop, where comparisons with PCC-based methods show improved performance under complex actuation conditions.

In this manuscript, we contribute:
\begin{itemize}
\item A reformulation of DER dynamics that explicitly incorporates actuation and yields a control-affine representation for underactuated soft robot manipulators (Fig.~\ref{Fig:Overview}),
\item A proof that the proposed reformulation preserves certain loading-deformation relationships consistent with classical beam mechanics, and
\item An experimental validation showing that our control sequence outperforms the PCC model in different task-space motions in hardware.
\end{itemize}

\section{Related Work}
\label{Sect: Related Work}
In this section, we review prior work on Discrete Elastic Rod (DER) models in soft robotics, focusing on the gap between their success in simulation and the challenges that arise when applying them to practical systems, and how this gap impacts trajectory generation.

\subsection{DER Modeling and Simulation}
DER models represent slender deformable structures using a finite set of discrete elements that capture bending, twisting, and stretching behaviors. 
Existing DER formulations are commonly constructed either from discretizations based on continuum rod partial differential equation (PDE) \cite{Gaston2026DERModel, Till2017Elastic, Gazzola2018softFilament, Zhang2019Elastica} or discretizations based on discrete differential geometry (DDG)~\cite{Choi2024DisMech}. DER often yields a compact state representation and dynamics\cite{Bergou2008Discrete, Goldberg2019OnPD, Liu2022Geometrically}, enabling them to capture deformation-driven mechanics beyond purely kinematic approximations. Therefore, DER frameworks are widely adopted for simulation. For example, DER-based models have been used to represent soft robot locomotion in both planar and spatial settings \cite{Goldberg2019OnPD}. Recently, DDG-based simulation platforms such as DisMech have provided solutions for highly dynamic, contact-rich scenarios \cite{Choi2024DisMech}, while lightweight implementations (e.g., MATLAB-based frameworks) create accessible environments for rapid prototyping and validation \cite{Goldberg2019OnPD, Gaston2026DERModel}. However, despite their success in simulation, extending DER models to practical robotic systems remains challenging and has received comparatively limited attention, particularly in the context of incorporating physically meaningful actuation modeling.

\subsection{Actuation and Input–State Modeling in DER Frameworks}
Most existing DER formulations are primarily \emph{simulation-oriented}, i.e., they are designed to reproduce deformation and contact robustly, while the mapping from physically realizable actuation to model inputs is often not explicitly defined \cite{Huang2022Design, Goldberg2019OnPD, Choi2021Implicit}. As a result, when DER models are adapted for practical soft robotic systems, researchers typically face a choice between two suboptimal approaches. Firstly, in many implementations, control inputs are defined through deformation variables (e.g., intrinsic curvature) \cite{Choi2024DisMech, huang_dynamic_2020}, which are then mapped to actuation through additional approximations or calibration. Conversely, if one considers direct force inputs as in the original DER physics, the system becomes inherently underactuated. In this setting, low-dimensional inputs must regulate high-dimensional deformations through complex, nonlinear geometric coupling~\cite{Jawed2018DER, Radha2025Py-DiSMech}. These challenges significantly limit the use of DER models for planning and control. To the best of our knowledge, no prior DER-based framework supports state-space motion planning with direct force inputs. 


\subsection{Trajectory Generation Challenges}
\label{Sect: Related Work-TrajGen}
Trajectory generation in robotic manipulation aims to construct dynamically feasible state--input trajectories $\{\boldsymbol{x}(t), \boldsymbol{u}(t)\}$ that satisfy the system dynamics $\dot{\boldsymbol{x}} = f(\boldsymbol{x}, \boldsymbol{u})$, while achieving desired task objectives defined in task space. These objectives $\boldsymbol{r}(t)$ are typically specified in Euclidean space (e.g., end-effector position), rather than directly in the high-dimensional state space\cite{Matthew2017TrajGen, Sanders2023DynTrajGen, Anthony2022TrajOpt}. 
The first limitation arises in high-fidelity continuum rod formulations: the mathematical complexity of solving the underlying partial or ordinary differential equations with boundaries makes it computationally difficult to ``invert" the model \cite{Till2017Elastic}, which is to determine the exact state trajectory $\boldsymbol{x}(t)$ required to follow a desired task-space path $r(t)$.

The second limitation is in reduced-order models like PCC. While these models can use properties like differential flatness to reconstruct robot shapes and virtual torques from joint angles \cite{Dickson2025Real-Time}, they suffer from an actuation disconnect. Since PCC dynamics are based on equivalent rigid-link approximations, their control input $\boldsymbol{u}$ is a virtual torque rather than a real-world signal such as pneumatic chamber pressure \cite{della_santina_improved_2020, della_santina_controlling_2017}.  In practice, mapping between them requires additional data-driven identification \cite{Dickson2025Safe}, which introduces inherent ambiguity and potential modeling errors in input mapping. Our proposed DDG-based DER formulation addresses these two gaps. Specifically, it provides a consistent mapping between state variables and control inputs, while the proposed actuation-aware formulation enables the construction of dynamically consistent state-input trajectories.

\section{Control-oriented DER Reformulation for Underactuated Soft Robots}
\label{Sect: methodology}

\subsection{DER Dynamics Model and Underactuation}

In this work, we adopt a DDG-based formulation of DER. 
The DER dynamics model adapted from \cite{Choi2024DisMech, Radha2025Py-DiSMech, Choi2021Implicit}, is:
\begin{equation}
    \boldsymbol{M}\ddot{\boldsymbol{q}} 
= \boldsymbol{F}_{\text{int}}(\boldsymbol{q}) 
+ \boldsymbol{F}_{\text{ext}}
\label{Eq:DER_original}
\end{equation}

\noindent where $\boldsymbol{q} \in \mathbb{R}^{4N-1}$ is defined as 
$\boldsymbol{q} = [\boldsymbol{q}_0, \, \phi_0, \,  \dots, \boldsymbol{q}_{N-2}, \, \phi_{N-2}, \, \boldsymbol{q}_{N-1}]^\top$, as Fig.~\ref{Fig:DER Node&Edge Demo}. The DER model describes nodal positions and material frame twists for $N$ discrete point masses that approximate the rod's backbone (Fig.~\ref{Fig:Overview}). The internal elastic force $\boldsymbol{F}_{\text{int}}$ is derived from the total potential energy, which consists of stretching ($E_s$), bending ($E_b$), and twisting ($E_t$) terms:
\[
\begin{aligned}
E_s & = \frac{1}{2} EA \sum_{i=0}^{N-2} (e_i - \bar{e}_i)^2, 
&\quad
E_b & = \frac{1}{2} EI \sum_{i=0}^{N-2} \|\boldsymbol{\kappa}_i - \boldsymbol{\bar{\kappa}}_i\|^2 \\
E_t & = \frac{1}{2} GJ \sum_{i=0}^{N-2} (\tau_i - \bar{\tau}_i)^2,
&\quad
\boldsymbol{F}_{\text{int}}(\boldsymbol{q}) & = -\frac{\partial}{\partial \boldsymbol{q}} \left(E_s + E_b + E_t\right)
\end{aligned}
\]

\noindent where $e_i$, $\boldsymbol{\kappa}_i$, and $\tau_i$ denote the discrete edge length, curvature, and twist, respectively. These quantities are functions of $\boldsymbol{q}$ and are computed following standard DDG-based DER formulations \cite{Jawed2018DER}. For clarity, constant scaling factors are omitted when not affecting the formulation. To apply DER to practical soft robots, the number of nodes $N$ is typically larger than 10, while the robot is actuated through only a small number of inputs (e.g., $\boldsymbol{u} \in \mathbb{R}^2$ for a two-segment design in Fig.~\ref{Fig:Overview}(a)), resulting in a severely \textit{underactuated} system. 
To enable control design under this setting, we reformulate $\boldsymbol{F}_{\text{ext}}$ to make the control input $\boldsymbol{u}$ explicit.

\begin{figure}[t]
\centerline{\includegraphics[width=1\linewidth]{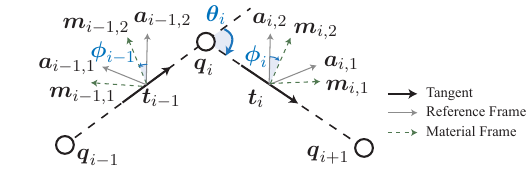}}
\caption{DER uses a physically interpretable state representation based on point masses and edges, and therefore reduces the dynamics to the state $\{\mathbf{q}, \boldsymbol{\phi}\}$, where $\mathbf{q}_i$ denotes positions and $\phi_i$ denotes twist angles. The bending angles $\theta$, material frames $\mathbf{m}$, and reference frames $\mathbf{a}$ can be represented by $\{\mathbf{q}, \boldsymbol{\phi}\}$.}
\label{Fig:DER Node&Edge Demo}
\vspace{-0.5cm}
\end{figure}

\subsection{External Force Decomposition}
As a DER can be regarded as a point-mass system connected by elastic edges, actuation occurs as generalized force vectors applied to those masses, which naturally introduces the notions of direction and magnitude. 
We hypothesize that the force direction is primarily determined by the actuation mechanical design, while the magnitude is induced by the actuator inputs. Following common modeling and calibration assumption \cite{Dickson2025Safe, Gaston2026DERModel}, we treat these two effects as separable, allowing the actuation to be factorized into a configuration-dependent mapping and an input-dependent scaling.
We propose to represent the external force as the sum of a damping term and an actuation-dependent term:
\begin{equation}
 \boldsymbol{F}_{\text{ext}} = \boldsymbol{D}\dot{\boldsymbol{q}} + \boldsymbol{B}(\boldsymbol{q})\boldsymbol{\Lambda}\boldsymbol{u} 
 \label{Eq: DER_revised}
\end{equation}
\noindent where:
\begin{itemize}
    \item $\boldsymbol{D} \in \mathbb{R}^{(4N-1) \times (4N-1)}$: diagonal damping matrix. For simplicity, we assume each element damping independently. D will be identified via data fitting.
    \item $\boldsymbol{B}(\boldsymbol{q}) \in \mathbb{R}^{(4N-1) \times m}$: geometry-dependent actuation map encoding force directions, where $m$ denotes the number of independent control inputs. Its structure will be discussed in Sec.~\ref{Sect: Methodology-ForceAnalysisDER-PCC}.
    \item $\boldsymbol{\Lambda} \in \mathbb{R}^{m \times m}$: scaling matrix mapping actuation inputs to force magnitudes, identified through calibration.
    \item $\boldsymbol{u} \in \mathbb{R}^m$: actuation input, representing control commands (e.g., chamber pressures in pneumatic systems).
\end{itemize}

\noindent This decomposition renders the DER dynamics control-affine with respect to the input $\boldsymbol{u}$, isolating the actuation directions through $\boldsymbol{B}(\boldsymbol{q})$ and the input magnitudes through $\boldsymbol{\Lambda}$. 
As a result, the underactuated input structure becomes explicit.

The key challenge in the proposed formulation lies in constructing the actuation mapping $\boldsymbol{B}(\boldsymbol{q})$. 
Since $\boldsymbol{B}(\boldsymbol{q})$ is inherently determined by the robot design and actuator placement, its structure varies across different systems and is generally nontrivial to derive.
To use \eqref{Eq: DER_revised}, we must derive a formulation of $\boldsymbol{B(q)}$. Here we choose a formulation, motivated by the common PCC assumption that observed in many bending-dominated pneumatic soft robots as Fig.~\ref{Fig:Overview}(a): under many common operating conditions, their deformation typically exhibits a PCC-like curvature profile \cite{dellasantina2018dynamic}. Importantly, we do not impose PCC as a kinematic constraint. Instead, we treat this PCC-like behavior as a nominal deformation manifold that reflects the dominant actuation pattern of the system, and use it to guide the construction of $\boldsymbol{B}(\boldsymbol{q})$. Later work demonstrates that this PCC-informed actuation model combined with DER dynamics improves trajectory generation compared to PCC alone.

\begin{figure}[t]
\centerline{\includegraphics[width=1\linewidth]{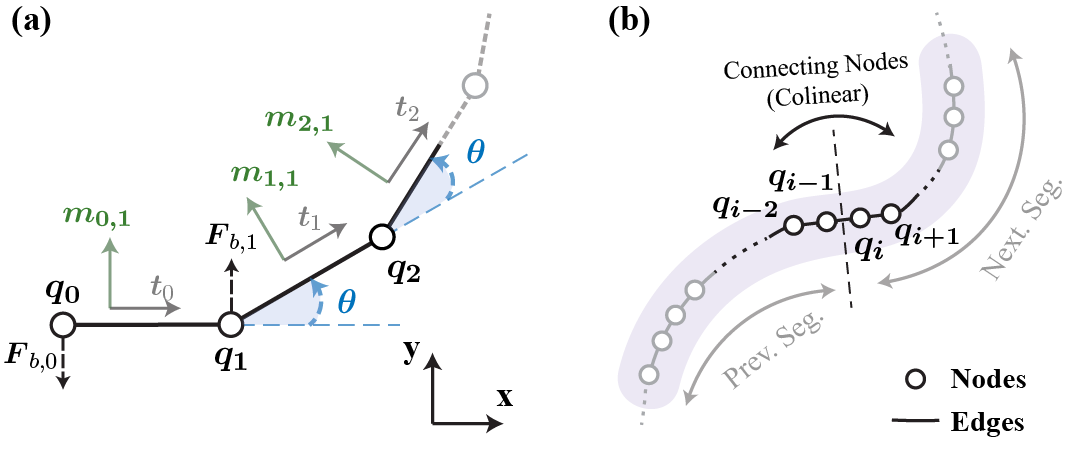}}
\caption{(a) PCC-like DER segment used to illustrate the internal force localization property in Thm.~\ref{Thm: DER-PCC force localization}, proving bending-induced forces concentrated at segment boundaries. 
(b) Illustration of the connecting nodes between adjacent segments in Cor.~\ref{Cor: segment decouple}, proving the colinearity induces the independence for segment-wise force decoupling.}
\label{Fig:DER-PCC Force Localization}
\vspace{-0.3cm}
\end{figure}

\subsection{Force Analysis of PCC-like Rod}
\label{Sect: Methodology-ForceAnalysisDER-PCC}

Under the PCC-informed actuation assumption, we analyze the direction distribution of elastic forces along the rod to motivate the choice of $\boldsymbol{B(q)}$. The following result shows that, under such deformation, the actuation-induced forces are localized at the segment boundaries:

\begin{theorem}[DER--PCC Actuation Force Localization]
Consider a DER segment with nodes $\{\boldsymbol{q}_i\}_{i=0}^{n-1}$ under piecewise-constant curvature. 
Under bending-dominated deformation, the internal elastic bending force vector $\boldsymbol{F}_b$ satisfies
\[
\boldsymbol{F}_{b,i} = \boldsymbol{0}, \qquad \forall i \in  \{2,\ldots,n-3\},
\]
\[
\boldsymbol{F}_{b,0} = -\boldsymbol{F}_{b,1}, \quad \boldsymbol{F}_{b,0}\perp \boldsymbol{t}_{0}
\]
\[
\boldsymbol{F}_{b,n-1} = -\boldsymbol{F}_{b,n-2}, \quad \boldsymbol{F}_{b,n-1}\perp \boldsymbol{t}_{n-2}\]
\noindent forming equal-and-opposite force pairs orthogonal to the corresponding boundary edges.
\label{Thm: DER-PCC force localization}
\end{theorem}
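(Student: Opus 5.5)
We plan to work directly with the discrete bending energy $E_b = \frac{1}{2}EI\sum_i \|\boldsymbol{\kappa}_i - \bar{\boldsymbol{\kappa}}_i\|^2$ restricted to one segment. We interpret ``piecewise-constant curvature'' at the discrete level: all interior vertices of the segment carry the same discrete curvature binormal $(\kappa\boldsymbol{b})_i = 2\,\boldsymbol{t}_{i-1}\times\boldsymbol{t}_i/(1+\boldsymbol{t}_{i-1}\cdot\boldsymbol{t}_i)$, lie in a common plane, and have equal edge lengths. We also interpret ``bending-dominated'' to mean that stretching and twisting contributions are neglected. Hence $\boldsymbol{F}_b = -\partial E_b/\partial \boldsymbol{q}$ with $\boldsymbol{\kappa}_i - \bar{\boldsymbol{\kappa}}_i = \boldsymbol{\delta}$, the same vector for all curved vertices. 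The two boundary vertices of the segment, where it joins its neighbors or the free/clamped end, are where the curvature mismatch changes. We take the segment's end edges $\boldsymbol{t}_0$ and $\boldsymbol{t}_{n-2}$ as the ones adjacent to those jumps.

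\textbf{Step 1 (local gradient structure).} We use the standard DDG formula for the gradient of $\kappa_i$, which depends only on $\boldsymbol{q}_{i-1},\boldsymbol{q}_i,\boldsymbol{q}_{i+1}$:
\[
\frac{\partial (\kappa\boldsymbol{b})_i}{\partial \boldsymbol{q}_{i-1}} = -\boldsymbol{A}_i,\qquad \frac{\partial (\kappa\boldsymbol{b})_i}{\partial \boldsymbol{q}_{i+1}} = \boldsymbol{C}_i,\qquad \frac{\partial (\kappa\boldsymbol{b})_i}{\partial \boldsymbol{q}_{i}} = \boldsymbol{A}_i-\boldsymbol{C}_i.
\]
Here $\boldsymbol{A}_i$ and $\boldsymbol{C}_i$ are the gradients with respect to the incoming and outgoing edge vectors. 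Translation invariance gives the three-term sum zero. Node $j$ then receives contributions only from vertices $j-1$, $j$ and $j+1$:
\[
\boldsymbol{F}_{b,j} = -EI\bigl[\boldsymbol{C}_{j-1}^\top\boldsymbol{\delta}_{j-1} + (\boldsymbol{A}_j-\boldsymbol{C}_j)^\top\boldsymbol{\delta}_j - \boldsymbol{A}_{j+1}^\top\boldsymbol{\delta}_{j+1}\bigr].
\]

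\textbf{Step 2 (interior cancellation).} Under constant curvature and equal edge lengths, the configuration is invariant under the discrete rotation mapping edge $k$ to edge $k+1$ (a rotation about the binormal). Hence $\boldsymbol{A}_{i+1} = \boldsymbol{R}\boldsymbol{A}_i\boldsymbol{R}^\top$, and similarly for $\boldsymbol{C}$, with $\boldsymbol{\delta}$ parallel to the invariant binormal. The gradient of each edge's contribution with respect to its own edge vector is the same rotated vector from both sides. We will show explicitly that $\boldsymbol{C}_{j-1}^\top\boldsymbol{\delta}$ and $\boldsymbol{A}_j^\top\boldsymbol{\delta}$ are both equal to $\frac{\|\boldsymbol{\delta}\|}{\ell}(\cdot)$ times the in-plane normal to edge $j-1$. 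The two then cancel through the telescoping sign pattern, and likewise for edge $j$. For $j\in\{2,\dots,n-3\}$ all three neighboring vertices are curved vertices with identical $\boldsymbol{\delta}$, so $\boldsymbol{F}_{b,j}=\boldsymbol{0}$. Equivalently, $E_b$ is a sum over edges of terms $g(\boldsymbol{e}_{j-1})-g(\boldsymbol{e}_j)$, and the sum telescopes.

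\textbf{Step 3 (boundary pairs).} At node $1$ the telescoping leaves the single unmatched term coming from edge $0$. At node $0$ only vertex $1$ contributes, through $-\boldsymbol{A}_1^\top\boldsymbol{\delta}$. Since only edge $0$ carries an unpaired gradient, translation invariance localized to that edge gives $\boldsymbol{F}_{b,0}=-\boldsymbol{F}_{b,1}$. Orthogonality follows because $\kappa_1$ depends on $\boldsymbol{e}_0$ only through its direction $\boldsymbol{t}_0$ and not its length. The gradient of a function of $\boldsymbol{e}/\|\boldsymbol{e}\|$ satisfies $\partial f/\partial\boldsymbol{e} = (\boldsymbol{I}-\boldsymbol{t}\boldsymbol{t}^\top)\,\partial f/\partial\boldsymbol{t}/\|\boldsymbol{e}\|$, which is orthogonal to $\boldsymbol{t}_0$. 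The end $n-1,n-2$ is handled by the mirror argument with $\boldsymbol{t}_{n-2}$.

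\textbf{Main obstacle.} The hard step is Step 2: verifying that the contributions of adjacent vertices to a shared edge's gradient cancel exactly. The formula for $\partial\kappa/\partial\boldsymbol{e}$ is not symmetric in the two edges in general, so the cancellation must use the rotational symmetry of the constant-curvature discrete arc together with $\boldsymbol{\delta}\parallel\boldsymbol{b}$. I would first reduce to the planar case with scalar signed curvature $\kappa_i = 2\tan(\varphi_i/2)$, where $\varphi_i$ is the turning angle. Then $\partial\varphi_i/\partial\boldsymbol{e}_{i}=\boldsymbol{n}_i/\ell$ and $\partial\varphi_{i+1}/\partial\boldsymbol{e}_i=-\boldsymbol{n}_i/\ell$, with $\boldsymbol{n}_i$ the in-plane normal. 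Equal $\varphi$, and hence equal $\kappa'(\varphi)\delta$, makes the cancellation immediate. The orthogonality in Step 3 also drops out of this reduction.
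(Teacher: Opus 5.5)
Your proposal is correct and follows essentially the paper's argument: write $\boldsymbol{F}_{b,j} = -\partial E_b/\partial \boldsymbol{e}_{j-1} + \partial E_b/\partial \boldsymbol{e}_j$, show that on every interior edge the contributions of the two adjacent equal-curvature vertices cancel, and read off the unmatched end-edge gradients as equal-and-opposite pairs normal to $\boldsymbol{t}_0$ and $\boldsymbol{t}_{n-2}$. Your turning-angle reduction ($\partial\varphi_i/\partial\boldsymbol{e}_i = -\partial\varphi_{i+1}/\partial\boldsymbol{e}_i = \boldsymbol{n}_i/\ell$) is a cleaner route to the cancellation that the paper gets from the identity $2\kappa(1+\cos\theta) = 4\sin\theta$ with $\kappa = 2\tan(\theta/2)$. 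Two wording fixes in Step 2: $\boldsymbol{C}_{j-1}^\top\boldsymbol{\delta}$ and $\boldsymbol{A}_j^\top\boldsymbol{\delta}$ enter $\boldsymbol{F}_{b,j}$ with the same sign, so they must be equal and \emph{opposite} (as your angle computation in fact shows); and it is the first variation of $E_b$ at the uniform configuration (proportional to $\sum_i d\varphi_i$) that telescopes, not $E_b$ itself.
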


\begin{proof}
As shown in Fig.~\ref{Fig:DER-PCC Force Localization}, under PCC, with no centerline extension or out-of-plane twist:
\[
\boldsymbol{F}_{b,i} = - \frac{\partial E_b}{\partial \boldsymbol{e}_{i-1}} 
+ \frac{\partial E_b}{\partial \boldsymbol{e}_i}.
\]


\noindent For a planar rod with uniform discretization and piecewise-constant curvature, we have the followings for nodes $\{1, 2, \dots, n-2\}:$
\[
\boldsymbol{\bar{\kappa}}_i = (\bar{\kappa}_{i, 1}, \bar{\kappa}_{i, 2}) \equiv (0, 0) \]
\begin{equation}
    \boldsymbol{\kappa}_i(t) = (\kappa_{i, 1}(t), \kappa_{i, 2}(t))  \equiv(\kappa(t), 0)
    \label{Eq: kappa}
\end{equation}
by 
$\boldsymbol{m}_{k, 1} = (\boldsymbol{t}_k \times -\boldsymbol{E}_3)$, 
$\boldsymbol{m}_{k, 2} := \boldsymbol{t}_{k} \times \boldsymbol{m}_{k, 1}$, and 
$\kappa_{i, 1} 
        := \frac{1}{2}(\boldsymbol{m}_{i-1, 2} + \boldsymbol{m}_{i, 2}) \cdot (\kappa \boldsymbol{b})_{i}$, 
$\kappa_{i, 2} 
        := \frac{1}{2}(\boldsymbol{m}_{i-1, 1} + \boldsymbol{m}_{i, 1}) \cdot (\kappa \boldsymbol{b})_{i}$

\noindent Expanding the bending force using DDG formulation gives:
\[
\boldsymbol{F}_{b,i} \propto 
-\kappa_{i-1} \frac{\partial \kappa_{i-1}}{\partial \boldsymbol{e}_{i-1}}
-\kappa_i \frac{\partial \kappa_i}{\partial \boldsymbol{e}_{i-1}}
+\kappa_i \frac{\partial \kappa_i}{\partial \boldsymbol{e}_i}
+\kappa_{i+1} \frac{\partial \kappa_{i+1}}{\partial \boldsymbol{e}_i}.
\]

\noindent This reduces to:
\[
\boldsymbol{F}_{bi} \propto \kappa 
        \cdot\frac{2\kappa(\cos{\theta} + 1) - 4\sin{\theta}}{||\boldsymbol{e}||(1+\cos{\theta})} \cdot 
        (\boldsymbol{t}_{i-1} - \boldsymbol{t}_{i})
\]

\noindent For constant curvature, by \eqref{Eq: kappa}, 
\[
\kappa_{i, 1}(t) 
    = \kappa(t)
    = 2\tan \frac{\theta(t)}{2}
\]

\noindent Therefore, for nodes in $\{2, 3, \dots n-3\}$, we have $\boldsymbol{F}_{bi} = 0$ by $2\kappa(\cos{\theta} + 1) - 4\sin{\theta} = 0$. 
And, at the boundaries, the symmetry is broken due to missing adjacent terms, yielding
\[
\boldsymbol{F}_{b,0} = -\boldsymbol{F}_{b,1} \propto   \frac{\sin \theta}{(1+\cos \theta)^2}\boldsymbol{m}_{0, 1}
\]
\[
\boldsymbol{F}_{b,n-1} = -\boldsymbol{F}_{b,n-2} \propto \frac{\sin \theta}{(1+\cos \theta)^2} \boldsymbol{m}_{n-2, 1}
\]
Thus, the directions are orthogonal to the corresponding boundary edges, and the bending force is localized at the boundary nodes. 

We can assume therefore that actuation forces only occurs at boundary nodes, normal to centerline. 
\end{proof}

This result is consistent with classical \textit{Euler–Bernoulli beam theory}, where constant-curvature deflections occur due to bending moments at boundary conditions only.

Building upon the force localization property established in Thm.~\ref{Thm: DER-PCC force localization}, we extend the analysis to a rod composed of multiple PCC-like segments. The primary challenge lies in the treatment of the interfaces between adjacent segments, where actuation effects may overlap and interact. To address this, we adopt the construction illustrated in Fig.~\ref{Fig:DER-PCC Force Localization}(b). Under this construction, the actuation effects associated with different segments can be introduced independently at the force level, while their coupled deformation behavior is captured by the full DER dynamics. This implies that adjacent PCC-like segments can be connected in a way that preserves force-level independence, providing a structured basis for constructing a segment-wise actuation mapping $\boldsymbol{B}(\boldsymbol{q})$:

\begin{corollary}[Segment-wise Decoupling]
\label{Cor: segment decouple}
Assume the former segment ends at nodes $\{i-2,i-1\}$ and the latter segment starts from nodes $\{i,i+1\}$, where tangent $i$ is defined by nodes $\{i,i+1\}$, as Fig.~\ref{Fig:DER-PCC Force Localization}(b). 
If the tangents $\boldsymbol{t}_{i-2}, \boldsymbol{t}_{i-1}, \boldsymbol{t}_{i}$ are colinear, then the two segments can be regarded as independent in terms of bending-induced forces.
\end{corollary}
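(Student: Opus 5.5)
The plan is to show that, at the interface between the two segments, the bending energy splits into a sum of a part that depends only on the former segment's edges and a part that depends only on the latter segment's edges. Once that holds, the bending force on every node near the interface, $\boldsymbol{F}_{b,j} = -\partial E_b/\partial \boldsymbol{q}_j$, splits the same way.

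First I will localize the coupling. In the DDG formulation, $E_b = \frac{1}{2}EI\sum_k \|\boldsymbol{\kappa}_k - \bar{\boldsymbol{\kappa}}_k\|^2$. Each discrete curvature $\boldsymbol{\kappa}_k$ depends only on the two edges meeting at node $k$, namely $\boldsymbol{e}_{k-1}$ and $\boldsymbol{e}_k$, together with their material frames. So the only terms that involve edges from both segments are the curvatures at the interface nodes $i-1$ and $i$. These are built from the edge pairs $(\boldsymbol{e}_{i-2},\boldsymbol{e}_{i-1})$ and $(\boldsymbol{e}_{i-1},\boldsymbol{e}_i)$. All other terms belong entirely to one segment, and Thm.~\ref{Thm: DER-PCC force localization} already applies to each segment separately.

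Next I will evaluate the interface terms under the colinearity hypothesis. If $\boldsymbol{t}_{i-2},\boldsymbol{t}_{i-1},\boldsymbol{t}_i$ are colinear, the turning angle $\theta$ at nodes $i-1$ and $i$ is zero. Hence $(\kappa\boldsymbol{b})_{i-1} = (\kappa\boldsymbol{b})_i = \boldsymbol{0}$, using $(\kappa \boldsymbol{b})_k = 2\,\boldsymbol{t}_{k-1}\times\boldsymbol{t}_k/(1+\boldsymbol{t}_{k-1}\cdot\boldsymbol{t}_k)$. It follows that $\boldsymbol{\kappa}_{i-1} = \boldsymbol{\kappa}_i = \boldsymbol{0}$, matching the interface's natural curvature $\bar{\boldsymbol{\kappa}} = \boldsymbol{0}$. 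Each interface term in $E_b$ is a square of a quantity that vanishes, so its gradient $(\boldsymbol{\kappa}_k-\bar{\boldsymbol{\kappa}}_k)^\top \partial \boldsymbol{\kappa}_k/\partial \boldsymbol{q}$ is zero. The interface therefore adds no bending force and carries no cross-segment coupling at first order.

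The total bending force then equals the sum of the forces from the two segment energies, each evaluated on its own nodes. By Thm.~\ref{Thm: DER-PCC force localization}, each of these is an equal-and-opposite pair at that segment's boundary nodes ($\{i-2,i-1\}$ and $\{i,i+1\}$), orthogonal to the corresponding boundary edge. These pairs can therefore be placed as independent columns of $\boldsymbol{B}(\boldsymbol{q})$, one column per segment.

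The main obstacle is the step where the interface curvature vanishes. This is a first-order statement: the gradient of $\frac{1}{2}\|\boldsymbol{\kappa}\|^2$ vanishes at $\boldsymbol{\kappa}=\boldsymbol{0}$, but the Hessian does not. So independence holds for forces at the colinear configuration, not for stiffness. I would state this explicitly, consistent with the corollary's wording ``in terms of bending-induced forces''.

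I would also check that the shared edge $\boldsymbol{e}_{i-1}$ does not reintroduce coupling through the boundary force of Thm.~\ref{Thm: DER-PCC force localization}. My approach is to note that the theorem's boundary force at node $i-1$ comes from $\partial \boldsymbol{\kappa}_{i-2}/\partial \boldsymbol{e}_{i-2}$, which involves only former-segment edges. For this to work, the construction in Fig.~\ref{Fig:DER-PCC Force Localization}(b) must be read as making $\boldsymbol{e}_{i-1}$ a straight connector edge with no curvature of its own, so that it does not enter either segment's curvature terms.
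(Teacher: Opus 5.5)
Your proposal is correct and follows essentially the same route as the paper. Colinearity forces $\boldsymbol{\kappa}_{i-1}=\boldsymbol{\kappa}_i=\boldsymbol{0}$, so the interface curvature terms add nothing to the bending force, and each segment's boundary pair reduces to the Thm.~\ref{Thm: DER-PCC force localization} expression depending only on its own $\theta_1$ or $\theta_2$. Your energy-splitting framing and your remark that the decoupling holds for forces at the colinear configuration but not for the stiffness (Hessian) are useful refinements that the paper leaves implicit.
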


\begin{proof}
Following the same argument as in Thm.~\ref{Thm: DER-PCC force localization}, we have
\[
\kappa_{k,1}(t)=\kappa_1(t)=2\tan \frac{\theta_1(t)}{2}, \quad \forall k<i-1, 
\]
\[
\kappa_{k,1}(t)=\kappa_2(t) = 2\tan \frac{\theta_2(t)}{2}, \quad \forall k>i.
\]
And $\kappa_{i-1, 1}(t)=\kappa_{i, 1}(t) \equiv 0$, therefore 
\[
\boldsymbol{F}_{b,i-1} = -\boldsymbol{F}_{b,i-2} \propto \frac{\sin \theta_1}{(1+\cos \theta_1)^2} \boldsymbol{m}_{i-2, 1} 
\]
\[
\boldsymbol{F}_{b,i} = -\boldsymbol{F}_{i,1} \propto   \frac{\sin \theta_2}{(1+\cos \theta_2)^2}\boldsymbol{m}_{0, i}
\]
\noindent Thus, the bending force are independent.
\end{proof}

\noindent Based on Thm.~\ref{Thm: DER-PCC force localization} and Cor.~\ref{Cor: segment decouple}, we can construct a sparse, block-structured $\boldsymbol{B}(\boldsymbol{q})$:

\begin{corollary} [$\boldsymbol{B(q)}$ for DER-PCC]
\label{Cor: construct of B(q)}
Let $N_j$ denotes the total node numbers of the $j^{th}$ limb, $N_0:=0$ and define $s_j := \sum_{k=0}^{j-1} N_k$. Consider a rod composed of $M$ PCC-like segments. 
For each segment $j$, define $\boldsymbol{B}^{j} \in \mathbb{R}^{(4N_j-1)\times 1}$, $\boldsymbol{B}^j_i \in \mathbb{R}^{3}$ as:
\[
\boldsymbol{B}^j = 
[\boldsymbol{B}_{s_j}^j, 0, \boldsymbol{B}_{s_j+1}^j, 0, \dots, \boldsymbol{B}_{s_{j}+N_j-2}^j, 0, \boldsymbol{B}_{s_j+N_j-1}^j,]^\top
\]
\[
\boldsymbol{B}_i^j \neq 0 \text{ iff } i \in \{s_j, s_j+1, s_j+N_j-2, s_j+N_j-1\}
\]

Specifically, for $\boldsymbol{B}_i^j$:
\[
\boldsymbol{B}_{s_j}^j = \mathbb{E}_3 \times \boldsymbol{t}_{s_j}, \quad 
\boldsymbol{B}_{s_j+1}^j = - \boldsymbol{B}_{s_j}^j
\]
\[
\boldsymbol{B}_{s_j+N_j-2}^j = -\mathbb{E}_3 \times \boldsymbol{t}_{s_j+N_j-2}, \quad 
\boldsymbol{B}_{s_j+N_j-1}^j = -\boldsymbol{B}_{s_j+N_j-2}^j  
\]

Therefore, we have:
\begin{equation}
\label{Eq: B(q)}
\boldsymbol{B}(\boldsymbol{q})
:=
\begin{bmatrix}
\boldsymbol{B}^1(\boldsymbol{q}) & \mathbf{0} & \cdots & \mathbf{0}\\
\mathbf{0} & \boldsymbol{B}^2(\boldsymbol{q}) & \cdots & \mathbf{0}\\
\vdots & \vdots & \ddots & \vdots\\
\mathbf{0} & \mathbf{0} & \cdots & \boldsymbol{B}^M(\boldsymbol{q})
\end{bmatrix}
\end{equation}

\end{corollary}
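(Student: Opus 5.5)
The construction in Cor.~\ref{Cor: construct of B(q)} is really a consequence of Thm.~\ref{Thm: DER-PCC force localization} and Cor.~\ref{Cor: segment decouple}, so the plan is to show that the stated $\boldsymbol{B}(\boldsymbol{q})$ reproduces, column by column, the direction pattern of the bending forces identified there. Concretely, I would show that for each segment $j$ the column $\boldsymbol{B}^j(\boldsymbol{q})$ is parallel (up to a positive scalar absorbed into $\boldsymbol{\Lambda}$) to the bending-force vector $\boldsymbol{F}_b$ that a PCC-like configuration of that segment generates. I would also show that the columns of different segments have disjoint supports, which gives the block-diagonal form \eqref{Eq: B(q)}.

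First, I fix a single segment $j$ and work in its local node indexing $i' = i - s_j \in \{0,\dots,N_j-1\}$. Theorem~\ref{Thm: DER-PCC force localization} gives three facts in this indexing:
\begin{itemize}
\item $\boldsymbol{F}_{b,i'}=\boldsymbol{0}$ for $2\le i'\le N_j-3$;
\item $\boldsymbol{F}_{b,0}=-\boldsymbol{F}_{b,1}$ with $\boldsymbol{F}_{b,0}\parallel \boldsymbol{m}_{0,1}$;
\item $\boldsymbol{F}_{b,N_j-1}=-\boldsymbol{F}_{b,N_j-2}$ with $\boldsymbol{F}_{b,N_j-1}\parallel \boldsymbol{m}_{N_j-2,1}$.
\end{itemize}
Since $\boldsymbol{m}_{k,1}=\boldsymbol{t}_k\times(-\mathbb{E}_3)=\mathbb{E}_3\times\boldsymbol{t}_k$, the nonzero entries $\boldsymbol{B}^j_{s_j}$, $\boldsymbol{B}^j_{s_j+1}$, $\boldsymbol{B}^j_{s_j+N_j-2}$, $\boldsymbol{B}^j_{s_j+N_j-1}$ match these directions, with the equal-and-opposite pairing built in.

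Second, I check signs. The common scalar prefactor $\sin\theta/(1+\cos\theta)^2$ is the same at both ends, so it factors out of the column and becomes the magnitude carried by $\Lambda_{jj}u_j$. The relative sign between the two ends has to be read off from the boundary asymmetry in the proof of Thm.~\ref{Thm: DER-PCC force localization}. At the proximal end the missing term is the $\kappa_{i-1}$ term; at the distal end it is the $\kappa_{i+1}$ term. These enter $\boldsymbol{F}_{b,i}$ with opposite orientation relative to $\boldsymbol{e}_{i-1}$ versus $\boldsymbol{e}_i$, which I expect produces the minus sign in $\boldsymbol{B}^j_{s_j+N_j-2}$.

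The twist coordinates $\phi$ receive zero entries because the deformation is planar with no twist, so $E_t$ and the $\phi$-derivatives vanish.

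Third, for multiple segments, I invoke Cor.~\ref{Cor: segment decouple}. Under the collinearity of $\boldsymbol{t}_{i-2},\boldsymbol{t}_{i-1},\boldsymbol{t}_i$ at each interface, the boundary forces of segment $j$ depend only on $\theta_j$, and those of segment $j+1$ only on $\theta_{j+1}$. Superposition of the segment-wise force patterns is therefore legitimate, and each input $u_j$ drives exactly one column. Because the node index sets $\{s_j,\dots,s_j+N_j-1\}$ are disjoint, the columns have disjoint row supports. Stacking them gives exactly the block-diagonal matrix \eqref{Eq: B(q)}, where the zero blocks pad rows belonging to other segments and the interface $\phi$ coordinate.

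I expect the main obstacle to be the sign and direction bookkeeping at the distal boundary and at the interfaces. The theorem only states proportionality, so I would redo the boundary computation explicitly for a three-node end (one curvature term) to pin down the orientation of $\boldsymbol{m}_{N_j-2,1}$ relative to $\boldsymbol{m}_{0,1}$. A secondary subtlety is dimensional: the stated $\boldsymbol{B}^j\in\mathbb{R}^{(4N_j-1)}$ blocks must be interleaved with one extra $\phi$ entry at each interface so that the rows sum to $4N-1$. I would resolve this by identifying the interface twist coordinate as a zero row shared between blocks.
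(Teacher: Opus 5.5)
Your route is the paper's route, spelled out in more detail. The paper's proof reads off from Thm.~\ref{Thm: DER-PCC force localization} and Cor.~\ref{Cor: segment decouple} that $\boldsymbol{F}_b^j$ lives only on the four boundary nodes with directions $\pm(\mathbb{E}_3\times\boldsymbol{t}_i)$, normalizes, and stacks the columns block-diagonally.

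The one step you have backwards is the orientation. The paper defines $\boldsymbol{B}_i^j := -\boldsymbol{F}_{b,i}^j/\|\boldsymbol{F}_{b,i}^j\|$, so each actuation column is \emph{anti}-parallel to the elastic bending force. The reason is that the actuator must supply the force that cancels the restoring force, $\boldsymbol{B}\boldsymbol{\Lambda}\boldsymbol{u}=-\boldsymbol{F}_{\text{int}}$ at a PCC rest shape. Your ``parallel up to a positive scalar'' matches the stated signs only because the $\propto$ in the theorem drops the minus sign of $\boldsymbol{F}_{\text{int}}=-\partial E/\partial\boldsymbol{q}$. Carry out the explicit three-node check you propose: for $\theta>0$ (so $\kappa=2\tan(\theta/2)>0$), the force on node $s_j$ points along $-\mathbb{E}_3\times\boldsymbol{t}_{s_j}$. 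Your convention would then return $-\boldsymbol{B}^j$ rather than the stated column.

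Also, the minus sign in $\boldsymbol{B}^j_{s_j+N_j-2}$ does not come from an orientation reversal between the two ends. The two asymmetries you point to cancel: the sign of the missing term, and differentiating with respect to an incoming versus an outgoing edge. Hence both outer nodes $s_j$ and $s_j+N_j-1$ carry the same sign relative to $\boldsymbol{m}_{\cdot,1}$, and both inner nodes carry the opposite sign, which is what the theorem's common prefactor says. The corollary simply lists the inner node first at the distal end.

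Your dimensional remark is correct, and the paper leaves it implicit. The blocks supply $\sum_j(4N_j-1)=4N-M$ rows. So $M-1$ zero rows for the interface-edge twists must be inserted between consecutive blocks to get a $(4N-1)\times M$ matrix; these rows belong to neither block rather than being shared.
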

\begin{proof}
From Thm.~\ref{Thm: DER-PCC force localization} and Cor.~\ref{Cor: segment decouple}, internal bending force for $j^{th}$ limb 
$\boldsymbol{F}_b^j$ is supported only at boundary nodes and satisfies 
$\boldsymbol{F}_{b,i}^j \propto \pm (\mathbb{E}_3 \times \boldsymbol{t}_i)$.
Defining $\boldsymbol{B}_i^j := -\boldsymbol{F}_{b,i}^j / \|\boldsymbol{F}_{b,i}^j\|$ yields the result.
\end{proof}

\noindent Crucially, eqn. \eqref{Eq: B(q)} depends only on the geometry of the rod configuration through $\boldsymbol{t}_i$, and is independent of the actuation input $\boldsymbol{u}$ and time explicitly.

\begin{figure*}[t]  
\centerline{\includegraphics[width=1\linewidth]{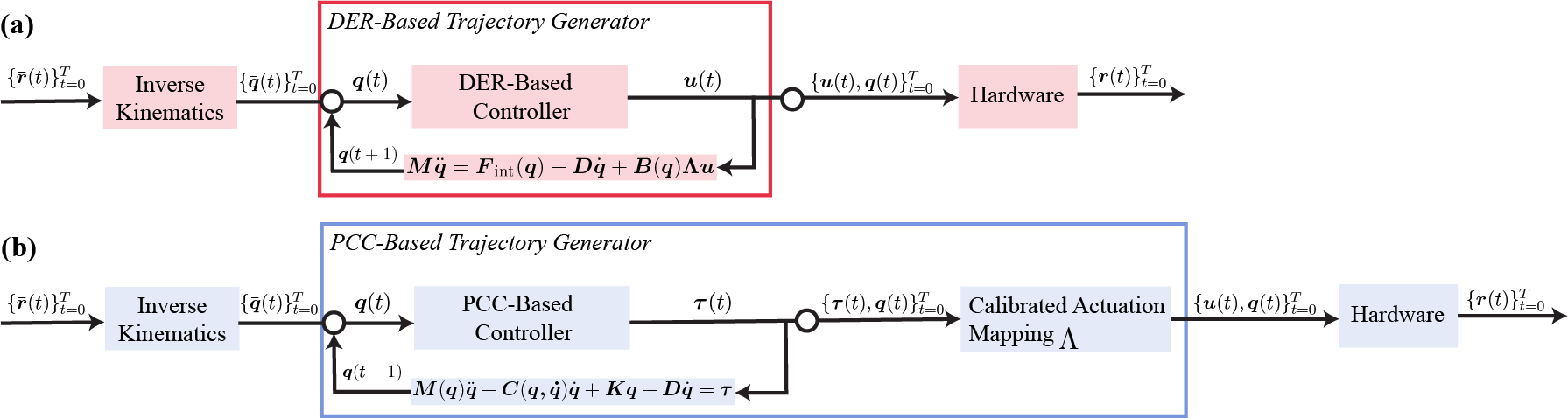}}
\caption{Trajectory generation using (a) the proposed DER formulation and (b) the PCC baseline. Both begin with inverse kinematics to obtain $\{\bar{\boldsymbol{q}}(t)\}_{t=0}^T$ from $\{\bar{\boldsymbol{r}}(t)\}_{t=0}^T$ \cite{Dickson2025Real-Time}. The DER-based approach directly computes actuation inputs $u(t)$ through control-affine dynamics, producing dynamically feasible state–input trajectories $\{\boldsymbol{q}(t), \boldsymbol{u}(t)\}_{t=0}^T$. In contrast, PCC approach computes virtual torques $\boldsymbol{\tau}(t)$ that are subsequently mapped to inputs $\boldsymbol{u}(t)$ via a calibrated mapping $\Lambda$.}
\label{Fig:traj gen diagram}
\end{figure*}

\begin{figure}[b]
\centerline{\includegraphics[width=1\linewidth]{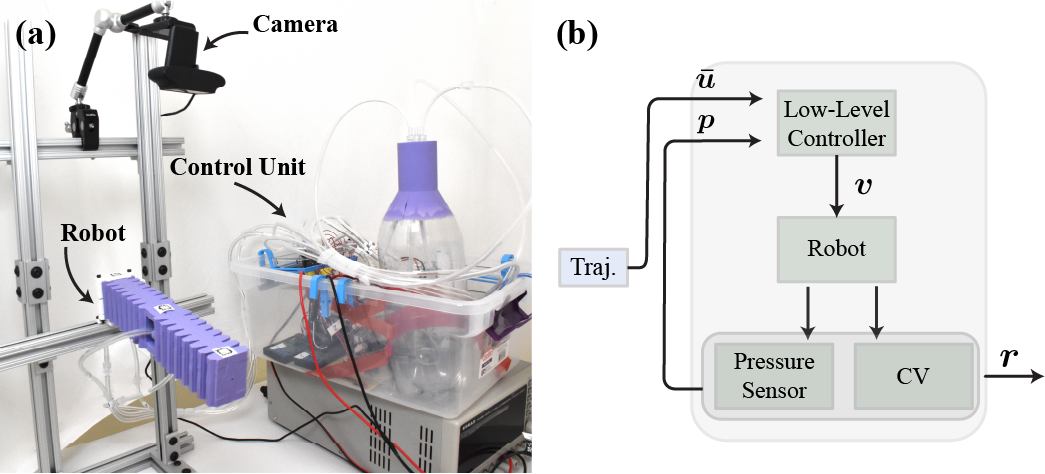}}
\caption{Architecture and hardware overview. Our setup uses (a) computer vision for ground truth measurements of the robot's pose and (b) a cascade control architecture where the high-level DER model generates desired pressure setpoints $\bar{u}$, which are then tracked by a low-level feedback controller with proportional valves and sensors.  
}
\label{Fig: hardware setup}
\vspace{-0.3cm}
\end{figure}

\section{Trajectory Generation}
While the proposed actuation structure in our DER reformulation is inspired by PCC modeling, it is essential to examine how the resulting dynamic model differs from conventional PCC formulations.
Rather than comparing models at a purely kinematic level, we evaluate their dynamic consistency through trajectory generation. Trajectory generation provides a practical and task-oriented benchmark: it requires synthesizing state and actuation sequences that track a task-space end-effector reference $\boldsymbol{r}(t)$ while satisfying the underlying system dynamics. As such, it directly reflects dynamic fidelity and implementation suitability for hardware execution.
To enable the comparison, we construct parallel dynamics-based trajectory generation frameworks for both DER and PCC, as illustrated in Fig.~\ref{Fig:traj gen diagram}. The resulting performance differences are analyzed in Sec.~\ref{Sect: experimental result}, where we demonstrate the improvements of the proposed DER reformulation over conventional PCC modeling. As illustrated in Fig.~\ref{Fig:traj gen diagram}, both frameworks begin with a task-space reference trajectory 
$\{\bar{\boldsymbol{r}}(t)\}_{t=0}^{T}$, which is converted into a reference configuration trajectory 
$\{\bar{\boldsymbol{q}}(t)\}_{t=0}^{T}$ via inverse kinematics \cite{Dickson2025Real-Time}. The distinction lies in how actuation is incorporated into the dynamics.
In the proposed DER reformulation, actuation enters the system explicitly through the control-affine structure. 
As a result, state and input trajectories $\{\boldsymbol{q}(t), \boldsymbol{u}(t)\}_{t=0}^{T}$ are synthesized in a single stage. 
The control input is directly computed from the dynamic equation as
\begin{align}
\boldsymbol{u}(t)
& =
\big(\boldsymbol{B}(\boldsymbol{q}(t))\boldsymbol{\Lambda}\big)^{\dagger}
\Big(
\boldsymbol{M}\ddot{\bar{\boldsymbol{q}}}(t)
-
\boldsymbol{F}_{int}(\bar{\boldsymbol{q}}(t))
+
\boldsymbol{K}_p\big(\bar{\boldsymbol{q}}(t)-\boldsymbol{q}(t)\big) \nonumber \\
& \quad +
\boldsymbol{K}_d\big(\dot{\bar{\boldsymbol{q}}}(t)-\dot{\boldsymbol{q}}(t)\big)
-
\boldsymbol{D}\dot{\boldsymbol{q}}(t)
\Big)
\label{Eq: u for DER}
\end{align} 
where the desired dynamics and feedback terms are resolved directly into actuation commands.
In contrast, the PCC model is formulated in terms of virtual joint torques $\boldsymbol{\tau}(t)$ associated with a dual rigid linkage \cite{della_santina_improved_2020}. 
The practical actuation input is obtained only afterward through a calibrated mapping,
\begin{equation}
\label{Eq: u for PCC}
\boldsymbol{u}(t) = \boldsymbol{\Lambda}^{\dagger}\boldsymbol{\tau}(t),
\end{equation}
resulting in a sequential post-hoc actuation conversion.

For both models, the synthesized state-input trajectories 
$\{\boldsymbol{q}(t), \boldsymbol{u}(t)\}_{t=0}^{T}$ are executed on hardware in an open-loop manner. 
This isolates intrinsic model accuracy from feedback compensation, such that any tracking performance differences arise directly from the underlying dynamic formulation. To evaluate hardware performance, we measure tracking accuracy by comparing the executed end-effector trajectory $\{\boldsymbol{r}(t)\}_{t=0}^{T}$ in Euclidean space, as it reflects the achieved motion in task space rather than internal state consistency.

\begin{figure*}[t]
\centerline{\includegraphics[width=1\linewidth]{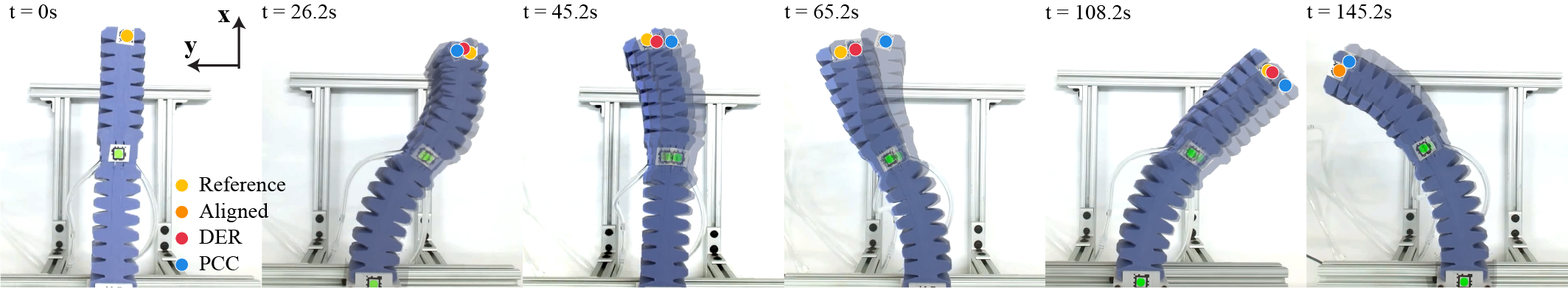}}
\caption{Comparison of trajectories generated using two dynamics models, DER (red) and PCC (blue), against Case 1 (Asynchronous Bending)'s reference poses (yellow). Orange indicates poses where DER aligns with reference (overlap of red and yellow). The figure shows tracking performance over time, demonstrating that DER achieves closer agreement with the reference and overall better accuracy.}
\label{Fig:hardware visualization}
\end{figure*}

\begin{figure*}[t]  
\centerline{\includegraphics[width=1\linewidth]{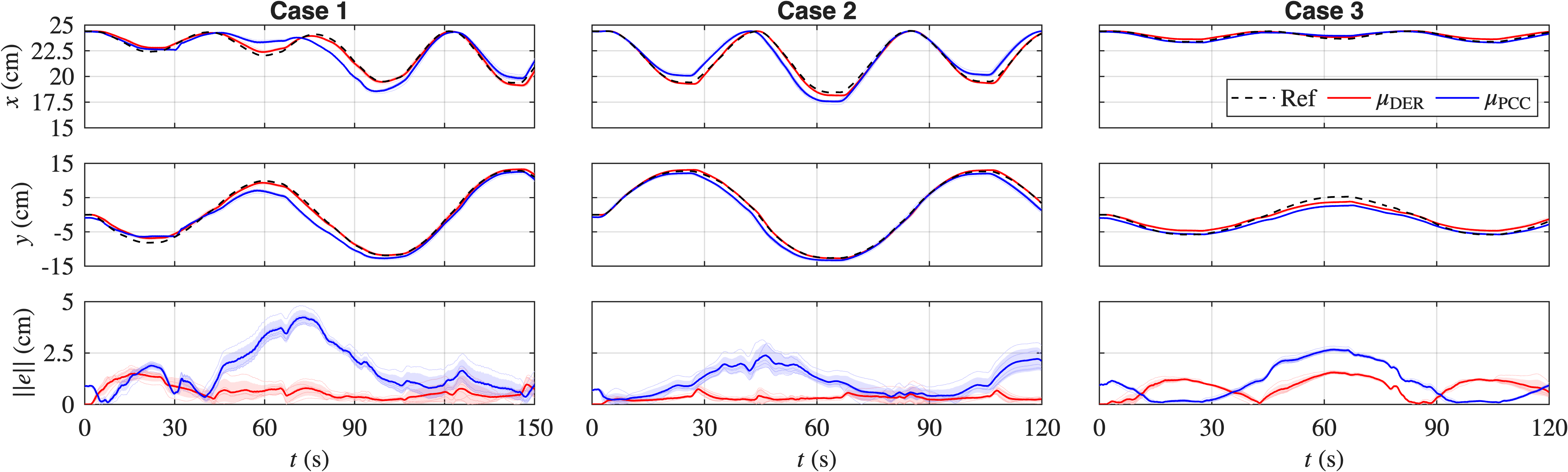}}
\caption{Tracking error ($x, y$) and MSE for three distinct motion profiles (Cases 1–3, Fig. \ref{Fig:hardware visualization}), with five trials per profile. Solid lines and shaded regions represent the mean error $\mu(e)$ and its standard deviation, respectively. Our method consistently outperforms the PCC baseline across all setups.
}
\label{Fig:tracking_performance}
\end{figure*}

\begin{table*}[t]
\centering  
\caption{Tracking performance comparison under different motion scenarios in cm for PCC model \cite{Dickson2025Real-Time} and DER model (this paper)}
\label{tab:tracking_error}
\begin{tabular}{l l c c c}
\hline
\textbf{Scenario} & \textbf{Model} & \textbf{Mean Error E (Total / x / y)} & \textbf{Std (Total / x / y)} & \textbf{Max Error (Total)} \\
\hline

Case 1 
& DER & \textbf{0.64} / \textbf{0.19} / \textbf{0.60} & \textbf{0.11} / \textbf{0.04} / \textbf{0.10} & \textbf{1.90} \\
& PCC & 1.59 / 0.54 / 1.44 & 0.26 / 0.08 / 0.25 & 4.81 \\
\hline
Case 2 
& DER & \textbf{0.29} / \textbf{0.13} / \textbf{0.23} & \textbf{0.05} / \textbf{0.01} / \textbf{0.05} & \textbf{0.98} \\
& PCC & 1.10 / 0.53 / 0.91 & 0.26 / 0.09 / 0.25 & 3.15 \\
\hline
Case 3 
& DER & \textbf{0.84} / 0.12 / \textbf{0.83} & 0.06 / 0.01 / 0.06 & \textbf{1.65} \\
& PCC & 1.03 / 0.12 / 1.01 & \textbf{0.04} / 0.01 / \textbf{0.04} & 2.83 \\

\hline
\end{tabular}
\end{table*}

\section{Experimental Result}
\label{Sect: experimental result}
In this section, we validate our proposed DER dynamics model and trajectory generation approach in hardware. The trajectory generation performance depends on how faithfully our DER model captures the robot’s deformation and actuation behavior. The experiments are designed to evaluate the accuracy of this model in practice. In particular, we compare the proposed DER formulation against a PCC-based baseline to prove that the improved physical representation translates into a more accurate trajectory execution on hardware.

\subsection{Hardware Setup}
The experimental platform we use in the following section consists of a 2-segment pneumatic soft robot with four chambers with planar motion (as shown in Fig.~\ref{Fig: hardware setup}). The robot is driven by differential pressure between paired chambers, which induces planar bending motions. AprilTags \cite{olson2011apriltag} are attached along the robot body and tracked using an overhead camera to estimate the configuration $\boldsymbol{r}$ via computer vision. The control unit operates at 20 Hz and receives the desired pressure difference input $u \in \mathbb{R}^2$. Chamber pressures are regulated by a low-level feedback controller that modulates airflow from a reservoir via proportional valves and pressure sensors. This controller computes real-time valve commands $\boldsymbol{v}$ to track desired pressure differentials between chambers. The prompt pressure response of the low-level system ensures that control inputs $u$ are tracked with a negligible latency of less than $10~\text{ms}$. The robot, measuring approximately $25~\text{cm}$ in length, is fabricated from a silicone elastomer (Smooth-Sil 945, Smooth-On).

\subsection{Experiment Setup}
To systematically examine how each model captures actuation coupling and compliant material interactions, we design a minimal yet representative set of motion scenarios that progressively increase inter-segment coupling complexity. 
The three cases are constructed to isolate and expose different aspects of dynamic interaction between the two soft segments.

We evaluate the proposed pipeline under three motion scenarios: Case 1: \textit{Asynchronous Bending} (Fig.~\ref{Fig:hardware visualization}), Case 2: \textit{Synchronous Bending (Same Direction)}, and Case 3: \textit{Synchronous Bending (Opposite Direction)}, as illustrated in Fig.~\ref{Fig:tracking_performance}. Case 2 represents the simplest configuration, where both segments bend synchronously in the same direction, resulting in minimal antagonistic interaction. 
Case 3 introduces synchronous but opposing bending, creating coordinated yet mechanically competing actuation effects. 
Case 1 further increases complexity by introducing asynchronous bending, intentionally amplifying inter-segment coupling and transient interaction effects. 
Together, these cases form a minimal test set that captures progressively richer coupling behaviors inherent to soft actuation.

For each scenario and each model, five repeated trials are conducted to evaluate consistency. 
Trajectory generation for the DER model is performed using the DisMech MATLAB implementation~\cite{Choi2024DisMech}, 
while the PCC-based trajectory generation is also implemented in MATLAB~\cite{Dickson2025Safe}.

The tracking error is quantified using the $\ell_2$ norm between the measured and reference tip trajectories as $\|\boldsymbol{e}(t)\| = \left\| \boldsymbol{r}(t) - \bar{\boldsymbol{r}}(t) \right\|_2$ at each time step in Fig.~\ref{Fig:tracking_performance}, and 
$E = \frac{1}{T} \int_{0}^{T} \left\| \boldsymbol{r}(t) - \bar{\boldsymbol{r}}(t) \right\|_2 \, dt$
for each trial in Tab.~\ref{tab:tracking_error}.

\subsection{Result}
Fig.~\ref{Fig:tracking_performance} shows the tracking performance of the tip position in terms of $x, \, y,$ and $\ell_2$ norm error $e$ over time for the three motion scenarios. Tab.~\ref{tab:tracking_error} further summarizes the quantitative metrics, including the mean error, standard deviation, and maximum error for each method. Across all three cases, DER consistently outperforms the PCC baseline. In particular, for Case 1 (Asynchronous Bending), DER achieves a mean error of 0.64 cm compared to 1.59 cm for PCC, corresponding to an improvement of approximately $60\%$. Notably, the mean error of DER remains below 1 cm in all scenarios, while PCC exceeds 1 cm in each case. In addition, DER exhibits lower maximum errors across all scenarios, indicating more stable tracking behavior with reduced oscillations, which is also observed in the experimental videos of PCC. 

\subsection{Discussion}
The experimental results consistently demonstrate that the proposed DER formulation outperforms the PCC baseline across all motion scenarios. Although the hardware is designed to exhibit approximately PCC-like behavior, real-world operation inevitably introduces deviations from this idealized assumption. As a result, the true deformation does not necessarily lie within the PCC model class. The improved DER performance observed in Fig.~\ref{Fig:tracking_performance} and Table~\ref{tab:tracking_error} indicates that its ability to capture deviations from ideal PCC assumptions directly contributes to the performance gap between the two models. The larger performance gap observed in Case 1 (asynchronous bending) further supports this interpretation. This motion profile induces more complex inter-segment coordination, increasing sensitivity to modeling assumptions. Since PCC-based models are constrained to a predefined curvature structure, their ability to represent such deviations is inherently limited. In contrast, the proposed DER formulation does not impose a fixed kinematic constraint and instead represents a broader range of deformation behaviors within its state space.

In contrast to high-fidelity PDE-based rod models, which require solving boundary value problems for trajectory inversion as discussed in \ref{Sect: Related Work-TrajGen}, the proposed DER formulation offers a finite-dimensional representation that is directly compatible with control-affine trajectory generation. 
The discrete DER formulation preserves the essential force–deformation relationships while reducing the system to a tractable state-space model that admits explicit actuation inversion \eqref{Eq: u for DER}. This structural property enables direct synthesis of dynamically feasible state–input trajectories without solving PDE-level boundary value problems.

Importantly, the improved tracking performance observed in Table. \ref{tab:tracking_error} directly reflects our  motivation to develop a control-oriented dynamics model that explicitly incorporates actuation and remains compatible with underactuated hardware. The consistently lower mean and maximum errors demonstrate that the generated trajectory pairs $\{\boldsymbol{q}(t), \boldsymbol{u}(t)\}$ are more dynamically consistent with the true system. In particular, the performance gains are most pronounced in scenarios where model–actuation mismatch becomes more significant, indicating that explicit actuation modeling plays a role in trajectory generation.

\section{Conclusion and Future Work}
This work presents a control-oriented reformulation of DER dynamics for soft robots and demonstrates its effectiveness in trajectory generation and hardware execution. By explicitly incorporating actuation into the model structure and exposing a control-affine representation, the proposed formulation enables direct integration of deformation-accurate dynamics into planning pipelines. Unlike prior approaches that rely on indirect or ad-hoc mappings from actuation to deformation variables, the proposed formulation directly relates actuation inputs to generalized forces within the system dynamics. This enables more accurate computation of dynamically feasible trajectory pairs $\{\boldsymbol{q}(t), \boldsymbol{u}(t)\}$, improving execution fidelity without requiring feedback compensation.

Looking forward, when soft robots operate under conditions that deviate from idealized constant-curvature assumptions, such as external loading or contact interactions, we hypothesize the underlying actuation mechanism remains unchanged while the resulting deformation becomes more complex. In such cases, the proposed formulation may better capture these deviations compared to PCC-type models, while maintaining a structured and control-compatible interface for trajectory generation. Future work will focus on extending the proposed formulation beyond the settings considered in this work. In particular, we aim to investigate whether the actuation–force structure remains effective when the system deviates from idealized PCC assumptions. 
Additionally, the explicit relationship between actuation inputs and generalized forces opens new opportunities for sensor-free estimation of external forces, allowing the system to infer interaction forces without dedicated force sensing.

\bibliographystyle{IEEEtran}
\bibliography{references}

\begin{thebibliography}{10}
\providecommand{\url}[1]{#1}
\csname url@samestyle\endcsname
\providecommand{\newblock}{\relax}
\providecommand{\bibinfo}[2]{#2}
\providecommand{\BIBentrySTDinterwordspacing}{\spaceskip=0pt\relax}
\providecommand{\BIBentryALTinterwordstretchfactor}{4}
\providecommand{\BIBentryALTinterwordspacing}{\spaceskip=\fontdimen2\font plus
\BIBentryALTinterwordstretchfactor\fontdimen3\font minus \fontdimen4\font\relax}
\providecommand{\BIBforeignlanguage}[2]{{%
\expandafter\ifx\csname l@#1\endcsname\relax
\typeout{** WARNING: IEEEtran.bst: No hyphenation pattern has been}%
\typeout{** loaded for the language `#1'. Using the pattern for}%
\typeout{** the default language instead.}%
\else
\language=\csname l@#1\endcsname
\fi
#2}}
\providecommand{\BIBdecl}{\relax}
\BIBdecl

\bibitem{vasicSafetyIssuesHumanrobot2013}
M.~Vasic and A.~Billard, ``Safety issues in human-robot interactions,'' in \emph{2013 IEEE International Conference on Robotics and Automation}, 2013, pp. 197--204.

\bibitem{Laschi2016Soft}
C.~Laschi, B.~Mazzolai, and M.~Cianchetti, ``Soft robotics: Technologies and systems pushing the boundaries of robot abilities,'' \emph{Science Robotics}, 2016.

\bibitem{Sanan2011Physical}
S.~Sanan, M.~H. Ornstein, and C.~G. Atkeson, ``Physical human interaction for an inflatable manipulator,'' in \emph{Annual International Conference of the IEEE Engineering in Medicine and Biology Society}, 2011.

\bibitem{Dickson2025Safe}
A.~K. Dickson, J.~C.~P. Garcia, M.~L. Anderson, R.~Jing, S.~Alizadeh-Shabdiz, A.~X. Wang, C.~DeLorey, Z.~J. Patterson, and A.~P. Sabelhaus, ``Safe autonomous environmental contact for soft robots using control barrier functions,'' \emph{IEEE Robotics and Automation Letters}, vol.~10, no.~11, pp. 11\,283--11\,290, 2025.

\bibitem{majidi_2014_soft_robots}
C.~Majidi, ``Soft robotics: A perspective—current trends and prospects for the future,'' \emph{Soft Robotics}, vol.~1, no.~1, pp. 5--11, 2014.

\bibitem{Hsuan2019SoftMedical}
J.-H. Hsiao, J.-Y.~J. Chang, and C.-M. Cheng, ``Soft medical robotics: clinical and biomedical applications, challenges, and future directions,'' \emph{Advanced Robotics}, vol.~33, no.~21, pp. 1099--1111, 2019.

\bibitem{Das2019Review}
A.~Das and M.~Nabi, ``A review on soft robotics: Modeling, control and applications in human-robot interaction,'' in \emph{2019 International Conference on Computing, Communication, and Intelligent Systems (ICCCIS)}, 2019, pp. 306--311.

\bibitem{doroudchi_configuration_2021}
A.~Doroudchi and S.~Berman, ``Configuration {Tracking} for {Soft} {Continuum} {Robotic} {Arms} {Using} {Inverse} {Dynamic} {Control} of a {Cosserat} {Rod} {Model},'' in \emph{{IEEE} {International} {Conference} on {Soft} {Robotics}}, 2021.

\bibitem{renda_dynamics_2024}
F.~Renda, A.~Mathew, and D.~F. Talegon, ``Dynamics and {Control} of {Soft} {Robots} {With} {Implicit} {Strain} {Parametrization},'' \emph{IEEE Robotics and Automation Letters}, 2024.

\bibitem{webster2010design}
R.~J. Webster~III and B.~A. Jones, ``Design and kinematic modeling of constant curvature continuum robots: A review,'' \emph{The International Journal of Robotics Research}, vol.~29, no.~13, pp. 1661--1683, 2010.

\bibitem{katzschmann2019dynamic}
R.~K. Katzschmann, C.~D. Santina, Y.~Toshimitsu, A.~Bicchi, and D.~Rus, ``Dynamic motion control of multi-segment soft robots using piecewise constant curvature matched with an augmented rigid body model,'' in \emph{IEEE International Conference on Soft Robotics (RoboSoft)}, 2019, pp. 454--461.

\bibitem{della_santina_model-based_2020}
C.~Della~Santina, R.~K. Katzschmann, A.~Bicchi, and D.~Rus, ``\BIBforeignlanguage{en}{Model-based dynamic feedback control of a planar soft robot: trajectory tracking and interaction with the environment},'' \emph{\BIBforeignlanguage{en}{The International Journal of Robotics Research}}, vol.~39, no.~4, pp. 490--513, Mar. 2020.

\bibitem{della_santina_improved_2020}
C.~Della~Santina, A.~Bicchi, and D.~Rus, ``On an {Improved} {State} {Parametrization} for {Soft} {Robots} {With} {Piecewise} {Constant} {Curvature} and {Its} {Use} in {Model} {Based} {Control},'' \emph{IEEE Robotics and Automation Letters}, vol.~5, no.~2, pp. 1001--1008, Apr. 2020.

\bibitem{Manti2015Underactuated}
M.~Manti, T.~Hassan, G.~Passetti, N.~d'Elia, M.~Cianchetti, and C.~Laschi, ``An under-actuated and adaptable soft robotic gripper,'' in \emph{4th International Conference on Biomimetic and Biohybrid Systems}, 07 2015.

\bibitem{Thuruthel2020First-Order}
T.~George~Thuruthel, F.~Renda, and F.~Iida, ``First-order dynamic modeling and control of soft robots,'' \emph{Frontiers in Robotics and AI}, vol. Volume 7, 2020.

\bibitem{Naughton2021Elastica}
N.~Naughton, J.~Sun, A.~Tekinalp, T.~Parthasarathy, G.~Chowdhary, and M.~Gazzola, ``Elastica: A compliant mechanics environment for soft robotic control,'' \emph{IEEE Robotics and Automation Letters}, vol.~6, no.~2, pp. 3389--3396, 2021.

\bibitem{Choi2024DisMech}
A.~Choi, R.~Jing, A.~P. Sabelhaus, and M.~K. Jawed, ``Dismech: A discrete differential geometry-based physical simulator for soft robots and structures,'' \emph{IEEE Robotics and Automation Letters}, vol.~9, no.~4, pp. 3483--3490, 2024.

\bibitem{Radha2025Py-DiSMech}
\BIBentryALTinterwordspacing
R.~Lahoti, R.~Chaiyakul, and M.~K. Jawed, ``Py-dismech: A scalable and efficient framework for discrete differential geometry-based modeling and control of soft robots,'' 2025. [Online]. Available: \url{https://arxiv.org/abs/2512.09911}
\BIBentrySTDinterwordspacing

\bibitem{Choi2021Implicit}
A.~Choi, D.~Tong, M.~K. Jawed, and J.~Joo, ``Implicit contact model for discrete elastic rods in knot tying,'' \emph{Journal of Applied Mechanics}, vol.~88, no.~5, p. 051010, 03 2021.

\bibitem{Gaston2026DERModel}
J.~B. Gaston, N.~S. Kumar, E.~J. Barth, and C.~Rucker, ``A 3d discrete elastic rod model and observer for continuum robots,'' \emph{Journal of Mechanisms and Robotics}, vol.~18, no.~3, p. 031002, 01 2026.

\bibitem{Till2017Elastic}
J.~Till and D.~C. Rucker, ``Elastic rod dynamics: Validation of a real-time implicit approach,'' in \emph{2017 IEEE/RSJ International Conference on Intelligent Robots and Systems (IROS)}, 2017, pp. 3013--3019.

\bibitem{Gazzola2018softFilament}
M.~Gazzola, L.~H. Dudte, A.~G. McCormick, and L.~Mahadevan, ``Forward and inverse problems in the mechanics of soft filaments,'' \emph{Royal Society Open Science}, vol.~5, no.~6, p. 171628, 06 2018.

\bibitem{Zhang2019Elastica}
X.~Zhang, F.~Chan, T.~Parthasarathy, and M.~Gazzola, ``Modeling and simulation of complex dynamic musculoskeletal architectures,'' \emph{Nature Communications}, vol.~10, 10 2019.

\bibitem{Bergou2008Discrete}
M.~Bergou, M.~Wardetzky, S.~Robinson, B.~Audoly, and E.~Grinspun, ``Discrete elastic rods,'' in \emph{ACM SIGGRAPH 2008 Papers}.\hskip 1em plus 0.5em minus 0.4em\relax New York, NY, USA: Association for Computing Machinery, 2008.

\bibitem{Goldberg2019OnPD}
N.~N. Goldberg, X.~Huang, C.~Majidi, A.~Novelia, O.~M. O’Reilly, D.~A. Paley, and W.~L. Scott, ``On planar discrete elastic rod models for the locomotion of soft robots,'' \emph{Soft Robotics}, vol.~6, pp. 595 -- 610, 2019.

\bibitem{Liu2022Geometrically}
Y.~Liu, K.~Song, and L.~Meng, ``A geometrically exact discrete elastic rod model based on improved discrete curvature,'' \emph{Computer Methods in Applied Mechanics and Engineering}, vol. 392, p. 114640, 2022.

\bibitem{Huang2022Design}
X.~Huang, Z.~Patterson, A.~Sabelhaus, W.~Huang, K.~Chin, Z.~Ren, K.~Jawed, and C.~Majidi, ``Design and closed‐loop motion planning of an untethered swimming soft robot using 2d discrete elastic rods simulations,'' \emph{Advanced Intelligent Systems}, vol.~4, 09 2022.

\bibitem{huang_dynamic_2020}
W.~Huang, X.~Huang, C.~Majidi, and M.~K. Jawed, ``\BIBforeignlanguage{en}{Dynamic simulation of articulated soft robots},'' \emph{\BIBforeignlanguage{en}{Nature Communications}}, vol.~11, no.~1, p. 2233, May 2020.

\bibitem{Jawed2018DER}
M.~K. Jawed, A.~Novelia, and O.~M. O'Reilly, \emph{A Primer on the Kinematics of Discrete Elastic Rods}, ser. SpringerBriefs in Thermal Engineering and Applied Science.\hskip 1em plus 0.5em minus 0.4em\relax Springer, 2018.

\bibitem{Matthew2017TrajGen}
M.~Kelly, ``An introduction to trajectory optimization: How to do your own direct collocation,'' \emph{SIAM Review}, vol.~59, no.~4, pp. 849--904, 2017.

\bibitem{Sanders2023DynTrajGen}
H.~P. Sanders and M.~D. Killpack, ``Dynamically feasible trajectory generation for soft robots,'' in \emph{IEEE International Conference on Soft Robotics (RoboSoft)}, 2023, pp. 1--8.

\bibitem{Anthony2022TrajOpt}
A.~Wertz, A.~P. Sabelhaus, and C.~Majidi, ``Trajectory optimization for thermally-actuated soft planar robot limbs,'' in \emph{IEEE International Conference on Soft Robotics (RoboSoft)}, 2022.

\bibitem{Dickson2025Real-Time}
A.~Dickson, J.~C.~P. Garcia, R.~Jing, M.~L. Anderson, and A.~P. Sabelhaus, ``Real-time trajectory generation for soft robot manipulators using differential flatness,'' in \emph{2025 IEEE 8th International Conference on Soft Robotics (RoboSoft)}, 2025, pp. 1--7.

\bibitem{della_santina_controlling_2017}
C.~Della~Santina, M.~Bianchi, G.~Grioli, F.~Angelini, M.~Catalano, M.~Garabini, and A.~Bicchi, ``Controlling {Soft} {Robots}: {Balancing} {Feedback} and {Feedforward} {Elements},'' \emph{IEEE Robotics Automation Magazine}, vol.~24, no.~3, pp. 75--83, Sep. 2017.

\bibitem{dellasantina2018dynamic}
C.~Della~Santina, R.~K. Katzschmann, A.~Biechi, and D.~Rus, ``Dynamic control of soft robots interacting with the environment,'' in \emph{IEEE International Conference on Soft Robotics (RoboSoft)}, 2018, pp. 46--53.

\bibitem{olson2011apriltag}
E.~Olson, ``Apriltag: A robust and flexible visual fiducial system,'' in \emph{IEEE International Conference on Robotics and Automation}, 2011.

\end{thebibliography}
\end{document}